\begin{document}

%


\title{\Large A Simple Sparse Matrix Vector Multiplication Approach to Padded Convolution}
\author{Zan Chaudhry\thanks{Radiology AI Laboratory, Johns Hopkins School of Medicine, Baltimore, MD, 21205 USA.}}

\date{}

\maketitle







\begin{abstract} \small\baselineskip=9pt 
We introduce an algorithm for efficiently representing convolution with zero-padding and stride as a sparse transformation matrix, applied to a vectorized input through sparse matrix-vector multiplication (SpMV). We provide a theoretical contribution with an explicit expression for the number of non-zero multiplications in convolutions with stride and padding, offering insight into the potential for leveraging sparsity in convolution operations. A proof-of-concept implementation is presented in \texttt{Python}, demonstrating the performance of our method on both CPU and GPU architectures. This work contributes to the broader exploration of sparse matrix techniques in convolutional algorithms, with a particular focus on leveraging matrix multiplications for parallelization. Our findings lay the groundwork for future advancements in exploiting sparsity to improve the efficiency of convolution operations in fields such as machine learning and signal processing.
\end{abstract}

\section{Introduction}
Signal processing and machine learning methods are both experiencing tremendous growth in the age of big data and intelligent systems. And both are dependent on two simple linear transformations: convolution and padding. Convolution ($*$) is defined as:

\begin{equation}
f (t) * g(t) = \int_{-\infty}^{\infty} f(\tau) g(t-\tau) d\tau
\end{equation}

In the finite discrete case with two-dimensional input (as in typical applications) with $f, g \in \mathbb{R}^{M\times N}$ this reduces to:

\begin{equation}
f [j,k] * g[j,k] = \sum_{m=1}^{M}\sum_{n=1}^{N} f[m, n] g[j-m, k-n]
\end{equation}

Convolution can also be implemented with stride ($s$), in which $m,n$ are incremented by integers, $s_1, s_2>1$ (we consider $s_1=s_2$, though the results can easily be extended to other cases). There are many forms of padding, but here we consider zero-padding, which is by far the most common. In zero-padding, an input, $\mathbf{A} \in \mathbb{R}^{m \times n}$, is reshaped to $\mathbf{A}_{\text{pad}} \in \mathbb{R}^{m+2p_1 \times n+2p_2}$ by surrounding the input with $p_1$ rows of zeros above and below the input and $p_2$ columns of zeros on the left and right (we consider $p_1=p_2$, though the results can easily be extended to other cases).

Convolution and padding arise in many current scientific computing problems, particularly image processing contexts and convolutional neural networks (CNNs). Sophisticated libraries and algorithms have been developed to implement these operations and leverage engineering advancements in computer architectures \cite{harris2020array, 2020SciPy-NMeth, 10.5555/3454287.3455008, tensorflow2015-whitepaper}. Particularly, matrix multiplication has been highly optimized through parallelized processing, and numerous methods have emerged for casting discrete convolution as matrix multiplication \cite{blackford2002updated, mkl_dnn, cudnn, anderson2017lowmemorygemmbasedconvolutionalgorithms}. 

Typical convolution algorithms that cast convolution as matrix multiplication have time complexity $O(m_\text{out}\times n_{\text{out}} \times k^2 )$, where $m_{\text{out}}, n_{\text{out}}$ are the dimensions of the output of the operation, and a $k \times k$ kernel is applied to the $m \times n$ input (we consider square kernels, though the results can easily be extended to other cases) \cite{chellapilla:inria-00112631}. Padding is implicit in this construction. We propose a sparse matrix vector multiplication (SpMV) algorithm for convolution with padding of reduced time-complexity:
\begin{equation*}
O\Bigg(\sum_{x=0}^{m_{\text{out}}-1} \text{ } \sum_{y=0}^{n_{\text{out}}-1} \text{max}\Big(0, k-c_{1}(x)\Big) 
\text{ max}\Big(0, k-c_{2}(y)\Big) \Bigg)
\end{equation*}
where:
\begin{align*}
c_{1}(x) &= \text{max}(0, p-sx) + \text{max}(0, sx+k-m-p) \\
c_{2}(y) &= \text{max}(0, p-sy) + \text{max}(0, sy+k-n-p)
\end{align*}

Our algorithm represents both convolution and padding as matrices that act on a vectorized input matrix. This method requires a one-time cost of constructing the transformation matrices, after which convolutions are simply SpMV operations. In contrast, popular current methods (particularly the \texttt{im2col} method) transform the input for each convolution and do not leverage sparsity \cite{chellapilla:inria-00112631, zhou2021characterizingdemystifyingimplicitconvolution}. We present the mathematical motivation for our method and provide proof-of-concept implementations in \texttt{Python} \cite{10.5555/1593511}. We provide CPU implementations using \texttt{SciPy} and \texttt{NumPy} and GPU implementations using \texttt{PyTorch} and \texttt{CuPy} \cite{2020SciPy-NMeth, harris2020array, 10.5555/3454287.3455008, cupy_learningsys2017}. We compare CPU and GPU implementations to the \texttt{Conv2D} method in \texttt{PyTorch}. 

\section{Derivation}

Consider an $m \times n$ input, which we pad to $m +2p \times n+2p$ and then convolve with  $\mathbf{K}$, a $k \times k$ kernel with stride $s$. We begin by vectorizing the input:

\begin{align}
\mathbf{A} &= \begin{bmatrix}
A_{11} & \cdots & A_{1n} \\
\vdots & \ddots & \vdots \\
A_{m1} &  \cdots & A_{mn} \\ 
\end{bmatrix} \nonumber \\
\Vec{a} = \text{vec}(\mathbf{A}) &= \begin{bmatrix} A_{11} & A_{12} & A_{13} & ... & A_{mn} \end{bmatrix}^{\text{T}}
\end{align}

We define the padding matrix, $\mathbf{P}$, such that the matrix form of the vector $\Vec{a}_{\text{pad}} = \mathbf{P}\Vec{a}$ is equivalent to $\mathbf{A}$ surrounded by zeros. We can define such a matrix, $\mathbf{P} \in \mathbb{R}^{(m+2p)(n+2p) \times (mn)^2}$, as:

\begin{align*}
\renewcommand{\arraystretch}{1.7}
\setlength\arraycolsep{3.5pt}
\mathbf{P} = \begin{bmatrix}
 0 & 0 & 0 &  \cdots & \\
 & &  \vdots & &  \text{(} n+2p\text{ rows of top padding)}\\
 0 & 0 & 0 & \cdots & \\
 & &  \vdots & &  \text{(} p\text{ rows of left padding)}\\
 1 & 0 & 0 & \cdots &  \\
 0 & 1 & 0 &   \cdots & \\
 & &  \vdots & &   \text{(} n\text{ rows of identity)}\\
 0 & 0 & 0 & \cdots & \\
 & &  \vdots & &  \text{(} p\text{ rows of right padding)}\\
 & &  \vdots & & \text{($m$ repeated blocks)}\\
 0 & 0 & 0 &  \cdots &\\
 & &  \vdots & &  \text{(} n+2p\text{ rows of bottom padding)}
\end{bmatrix}
\end{align*}

$\mathbf{P}$ is a block matrix characterized by leading rows of zeros to represent the top row of padding and then blocks of the identity matrix between rows of zeros, which capture each row of $\mathbf{A}$ with padding on either side, followed by trailing rows of zeros to represent the bottom row of padding. We next consider convolution on $\Vec{a}_{\text{pad}}$. We aim to build a matrix form of the convolution with the kernel, $\mathbf{C}$. First, let's consider the kernel:

\begin{equation}
\mathbf{K} = \begin{bmatrix}
K_{11} & \cdots & K_{1k} \\
\vdots & \ddots & \vdots \\
K_{k1} &  \cdots & K_{kk} \\ 
\end{bmatrix} \nonumber \\
\end{equation}

The kernel begins in the upper left corner of the input, and each row of the kernel acts on the first $k$ columns of the corresponding row of the input, from rows $1$ to $k$. Thus, the first row of $\mathbf{C}$ is given by:

\begin{align*}
\begin{bmatrix}
    K_{11} & \cdots & K_{1k} & 0 & \cdots & \text{(} n+2p-k\text{ zeros)} & \cdots
\end{bmatrix}
\end{align*}
\begin{center}
    \text{(repeated} $k$ \text{times with increasing rows of $\mathbf{K}$)}
\end{center}

After the repeats, there are $(n+2p)(m+2p-k)$ trailing zeros (to capture the rows of the input to which the kernel is not applied). The kernel then slides $s$ columns to the right over the input. This shifts the entries of the first row of $\mathbf{C}$ by $s$ zeros. The second row (displayed split onto two lines) is thus:

\begin{align*}
\setlength\arraycolsep{3.5pt}
\begin{bmatrix}
    0 & \cdots & \text{(}s \text{ zeros)} &K_{11} & \cdots & K_{1k}  \\
    & & 0 & \cdots & \text{(} n+2p-k-s\text{ zeros)} & \cdots
\end{bmatrix}
\end{align*}
\begin{center}
    \text{(repeated} $k$ \text{times with increasing rows of $\mathbf{K}$)}
\end{center}

There are the same trailing zeros as before. This sliding continues in the same form until the kernel reaches the end of the input matrix. The rightmost entry of $\mathbf{K}$ begins at $A_{\text{pad}_{1k}}$. There are $n+2p-k$ columns following this position along the row. Thus, the kernel can slide a maximum of $\lfloor (n+2p-k) / s\rfloor$ times. If there is a remainder in the division, then there are trailing zeros, representing the region the kernel was unable to slide over. There will be $r=n+2p-k - s \lfloor (n+2p-k) / s\rfloor$ trailing zeros.

The next step is moving the kernel vertically. This involves adding $(n+2p)s$ leading zeros to the start and removing $(n+2p)s$ trailing zeros from the ends of the previously determined row patterns (representing moving the kernel vertically by $s$ rows). We then repeat the previously shown pattern for horizontal sliding. Thus, the first row after the first vertical slide becomes (split onto two lines):

\begin{align*}
\setlength\arraycolsep{3pt}
\begin{bmatrix}
    0  \cdots  \Big((n+2p)s \text{ zeros} \Big)&  \\
    \Big\{K_{11}  \cdots  K_{1k} &0  \cdots \text{(} n+2p-k\text{ zeros)}\Big\}  \cdots
\end{bmatrix}
\end{align*}
\begin{center}
    \text{($\{\}$ repeated} $k$ \text{times with increasing rows of $\mathbf{K}$)}
\end{center}

\begin{center}
    \text{(with $(n+2p)(m+2p-k-s)$ trailing zeros)}
\end{center}

Here, $\{\}$ in the caption represents the expression in curly brackets within the row. We continue this process for additional vertical slides on the input; however, once we reach the end while vertically sliding the kernel, we may have the same remainder effect as with horizontal sliding, since we can only slide $\lfloor (m+2p-k) / s\rfloor$ times. In the absence of the remainder, the set of rows at the furthest vertical position representing the last horizontal slide of the kernel will have no trailing zeros (the last $\lfloor (n+2p-k) / s\rfloor+1 $ rows of $\mathbf{C}$). If there is a remainder, these rows will have $(n+2p)q$ trailing zeros, where $q=m+2p-k - s \lfloor (m+2p-k) / s\rfloor$.

We can capture the overall process as a block matrix. We have the repeating patterns captured in the matrix, $\mathbf{R}$, which represents the sliding of the kernel from left to right, as described in the demonstration for the first two rows of $\mathbf{C}$. $\mathbf{R}$ includes the $r$ trailing zeros on each row to account for the remainder when sliding horizontally. $\mathbf{R}$ is an $(\lfloor (n+2p-k) / s\rfloor + 1) \times (n+2p)k$ matrix. We can capture vertical sliding by representing the changing leading/trailing zeros as $(\lfloor (n+2p-k) +1 )/ s\rfloor \times (n+2p)s$ zero matrices, which we refer to as $\mathbf{0}_{\text{vert}}$. However, as noted previously, there may be a remainder when vertically sliding, so we define $\mathbf{0}_{q}$ as $(\lfloor (n+2p-k) / s\rfloor + 1) \times (n+2p)q$ to account for it. Furthermore, we define: $v=\lfloor (m+2p-k) / s\rfloor$. The overall matrix is constructed as:
\begin{align}
\mathbf{C=}
\begin{bmatrix}
    \mathbf{R} & \mathbf{0}_\text{vert} &\cdots &  (v \times) & &\mathbf{0}_q\\
    \mathbf{0}_\text{vert} & \mathbf{R} &  \mathbf{0}_\text{vert} & \cdots &\Big((v-1) \times\Big) & \mathbf{0}_q \\
    \vdots & & \ddots & &  &\vdots \\
    \mathbf{0}_\text{vert} &\cdots &(v \times)  &  \mathbf{R}  & &\mathbf{0}_q\\
\end{bmatrix}
\end{align}
Here, the expressions in parentheses express the number of repeated units of $\mathbf{0}_\text{vert}$. Finally, we have that for input $\mathbf{A}$, padding by $p$ and convolution with $\mathbf{K}$ with a stride of $s$ can be represented as:
\begin{align}
    \text{vec}^{-1}\Big(\mathbf{CP}\text{ vec}(\mathbf{A})\Big)
\end{align}

Both $\mathbf{C}$ and $\mathbf{P}$ are sparse matrices and very sparse for large inputs and small kernels, as are common in image processing/machine learning contexts. Thus, we can leverage a sparse matrix representation of the product $\mathbf{CP}$ and perform SpMV. Sparse representations store only the non-zero entries of the matrix. 

There is a one-time cost of creating $\mathbf{CP}$, after which the number of non-zero entries quantifies the total number of multiplications performed in the convolution operation and thus the time complexity of the algorithm. The total number of multiplications in current methods, including with zeros, is given by $m_\text{out}\times n_{\text{out}} \times k^2$, where $m_{\text{out}} =\lfloor (m+2p-k) / s\rfloor +1$ and $n_{\text{out}} =\lfloor (n+2p-k) / s\rfloor +1$ are the dimensions of the output. Each output element corresponds to a position of the kernel, so there are $k^2$ multiplications performed for each element of the output. 

Current implementations largely use the \texttt{im2col} operation, which converts the input into a matrix whose columns correspond to patches of the input upon which the kernel operates \cite{chellapilla:inria-00112631}. These methods have time complexity $O(m_\text{out}\times n_{\text{out}} \times k^2 )$. Our method, in contrast, excludes zero entries, which can constitute a significant number of the computations (particularly when padding constitutes a significant fraction of the matrix, as in the low-dimensional feature maps extracted by CNNs). We now derive the total number of non-zero multiplications. 

\begin{theorem} 
There are at most:
\begin{equation*}
\sum_{x=0}^{m_{\textnormal{out}}-1} \textnormal{ } \sum_{y=0}^{n_{\textnormal{out}}-1} \textnormal{max}\Big(0, k-c_{1}(x)\Big) 
\textnormal{ max}\Big(0, k-c_{2}(y)\Big)
\end{equation*}
non-zero multiplications when convolving a $k \times k$ kernel with an $m \times n$ input with padding $p$ and stride $s$, where:
\begin{align*}
c_{1}(x) &= \textnormal{max}(0, p-sx) + \textnormal{max}(0, sx+k-m-p) \\
c_{2}(y) &= \textnormal{max}(0, p-sy) + \textnormal{max}(0, sy+k-n-p) \\
m_{\textnormal{out}} &=\lfloor (m+2p-k) / s\rfloor +1 \\
n_{\textnormal{out}} &=\lfloor (n+2p-k) / s\rfloor +1 \\
k &\leq m+2p, \text{ }n+2p
\end{align*}\end{theorem}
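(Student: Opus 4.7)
The plan is to count, for each output position, the number of kernel entries whose partner in $\mathbf{A}_\text{pad}$ lies in the original input region rather than in a padded border. Only those products can possibly be nonzero, so summing them over all kernel placements gives an upper bound on the true number of non-zero multiplications; the ``at most'' phrasing in the theorem absorbs the further reduction that can occur if entries of $\mathbf{A}$ or $\mathbf{K}$ themselves happen to vanish.

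First I would fix coordinates. For an output index $(x,y)$ with $0 \leq x \leq m_\text{out}-1$ and $0 \leq y \leq n_\text{out}-1$, the kernel occupies padded rows $sx,\ldots,sx+k-1$ and padded columns $sy,\ldots,sy+k-1$, while the non-zero band of $\mathbf{A}_\text{pad}$ is exactly rows $[p,p+m-1]$ and columns $[p,p+n-1]$. Since both the kernel footprint and the non-zero band factor as products of a row interval and a column interval, the cardinality of their intersection factors as (overlapping rows) $\times$ (overlapping columns), so it suffices to handle the two dimensions separately.

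Next I would evaluate the row factor. Of the $k$ kernel rows, those lying above the input (padded index below $p$) number $\max(0,p-sx)$, and those lying below the input (padded index at least $p+m$) number $\max(0,sx+k-m-p)$; their sum is exactly $c_1(x)$, so the kernel has $k-c_1(x)$ rows over the non-zero band, clipped to zero via the outer $\max$ to cover the degenerate case in which the kernel lies entirely inside a padded border. The symmetric argument on columns yields $\max(0,k-c_2(y))$. Multiplying these two factors and summing over $(x,y)$ on the output grid produces the displayed expression.

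The main subtlety, and what I expect to be the only bookkeeping hazard, is the boundary accounting in $c_1$ and $c_2$. In the standard CNN regime where $p<k$ and $k\leq\min(m,n)$, at most one of the two summands in $c_1(x)$ is active at a given $x$, each is bounded by $k$, and $k-c_1(x)$ is already a tight non-negative count. In the degenerate regimes permitted by the hypothesis $k\leq m+2p$, however, either $p-sx$ or $sx+k-m-p$ can individually exceed $k$ (kernel sitting entirely inside one pad), or both can be simultaneously positive (kernel straddling both pads and thus covering the input in its entirety along that axis); I would verify that in the first case $c_1(x)>k$ and the outer clipping correctly returns $0$, while in the second case $c_1(x)=k-m$ and the formula correctly returns $m$. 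These case checks are the only nontrivial step in the argument.
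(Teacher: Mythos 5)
Your proposal is correct and follows essentially the same route as the paper's proof: both separate the count into a row factor and a column factor, identify $c_1(x)$ and $c_2(y)$ as the number of kernel rows/columns falling in the padding at each output position, and clip with the outer $\max$ to handle placements lying entirely within the padding. Your static interval-intersection framing (and the explicit check that $c_1(x)=k-m$ when the kernel straddles both pads) is a slightly cleaner packaging of the paper's sliding-window narrative, but the underlying argument is the same.
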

\begin{proof}
As a preliminary, we state ``at most'' because the kernel or the input may contain additional zeros, beyond those in the padding. Now consider the starting position of the kernel in the upper left corner of the input. In the first case that $p<k$, there are $k-p$ non-zero entries within the receptive field of the kernel horizontally (columns) and vertically (rows) and thus $(k-p)^2$ non-zero multiplications. 

Sliding to the next horizontal position, there will still be the same top padding of $p$ zeros, but if $p>s$, then there will be $p-s$ zeros remaining of left padding. However, if $p<s$, then there will be no left padding remaining. Thus, there will be $k-p$ rows of non-zero entries and $k-\text{max}(0, p-s)$ columns of non-zero entries, giving $(k-p)(k-\text{max}(0, p-s))$ non-zero multiplications. We continue sliding horizontally in this manner, incrementing the zeros in the left padding by the stride, such that we have $(k-p)(k-\text{max}(0, p-sy))$ non-zero multiplications, at each horizontal slide of the kernel, $y$. 

However, if any region of the kernel occupies the columns beyond $n+p$, then there will be columns of zeros of right padding. We can account for these columns by counting the columns of the kernel at each horizontal slide, $y$, that extend beyond this region with $sy+k-(n+p)$. The additional $k$ term accounts for the width of the kernel beyond the position $sy$. Therefore, if $sy+k>n+p$, then there will be $k-(sy+k-n-p)$ columns of non-zero elements and the same $k-p$ rows of non-zero elements. Meanwhile, if $sy+k<n+p$, then there is no right padding. Thus, the total number of non-zero multiplications accounting for right padding is given by: $(k-p)(k-\text{max}(0, sy+k-n-p))$. We know that the total number of horizontal slides is given by $n_{\text{out}}-1 = \lfloor (n+2p-k) / s\rfloor$. If we combine our expressions for left and right padding over the first horizontal slide, we have:
\begin{equation*}
\sum_{y=0}^{n_{\textnormal{out}}-1} ( k-p) 
(k-c_{2}(y))
\end{equation*}
with: 
\begin{equation*}
c_{2}(y) = \textnormal{max}(0, p-sy) + \textnormal{max}(0, sy+k-n-p)
\end{equation*}

Now we consider vertical sliding, again for the $p<k$ case. As we slide vertically, the top padding shrinks. Similar to the horizontal case, if $p>s$, then there will be $p-s$ remaining rows of top padding, while if $p<s$, there will be no top padding of zeros remaining. So we have $k-\text{max}(0, p-s)$ rows of top padding after the first vertical slide. We can increment vertical slides with $x$, such that for each vertical slide, we will have $k-\text{max}(0, p-sx)$ rows of top padding. Now we consider bottom padding similar to the horizontal case. If any region of the kernel occupies the rows beyond $m+p$, then there will be rows of zeros of bottom padding. We can account for these rows with the same reasoning as above, giving $\text{max}(0, sx+k-m-p)$ rows of bottom padding at any vertical slide, $x$. Therefore, the number of non-zero rows at any vertical position is given by:
\begin{equation*}
k-c_{1}(x)
\end{equation*}
with: 
\begin{align*}
c_{1}(x) = \textnormal{max}(0, p-sx) + \textnormal{max}(0, sx+k-m-p)
\end{align*}

Combining our expressions for vertical and horizontal slides and summing at each position $(x,y)$, for $p<k$, the number of non-zero multiplications is given by:
\begin{equation}
\sum_{x=0}^{m_{\textnormal{out}}-1} \textnormal{ } \sum_{y=0}^{n_{\textnormal{out}}-1} \Big(k-c_{1}(x)\Big) \Big(k-c_{2}(y)\Big)
\end{equation}

Now we extend briefly to the $p>k$ case. If $p>k$, then there will be regions where the kernel only overlaps with zero entries, and there will be no non-zero multiplications in these regions. These regions will occur when the kernel is in a position within the padding, or when $p-sx>k$, $sx+k-m-p>k$, $p-sy>k$, or $sy+k-n-p>k$, which would result in negative values in our expressions for the number of non-zero rows and columns in Equation 2.8. In these cases, there are no non-zero multiplications; thus, we modify Equation 2.8 with our final result for the total number of non-zero multiplications:
\begin{equation}
\sum_{x=0}^{m_{\textnormal{out}}-1} \textnormal{ } \sum_{y=0}^{n_{\textnormal{out}}-1} \textnormal{max}\Big(0, k-c_{1}(x)\Big) 
\textnormal{ max}\Big(0, k-c_{2}(y)\Big)
\end{equation}
\end{proof}

\section{Algorithm Implementation}
First, we define an algorithm for constructing the padding matrix, $\mathbf{P}$, in Algorithm 1. We construct a sparse matrix (SpM) representation of $\mathbf{P}$ by defining the indices of the ones (which are the only non-zero entries) within $\mathbf{P}$. Next, we define an algorithm for constructing the convolution matrix, $\mathbf{C}$, in Algorithm 2. We present a method that constructs a dense matrix of the repeating pattern, $\mathbf{R}$, and then converts it to a sparse matrix, though this could be optimized as in Algorithm 1 to simply calculate the indices of the non-zero values. However, since we only have a one-time construction cost of $\mathbf{C}$ and this work is focused on the time complexity of the convolution operation itself, we do not derive such an algorithm. Finally, we restate Equation 2.7 in Algorithm 3. At this stage, we leave the exact SpM representation ambiguous. In the following section, we experimentally consider two different SpM representations that are optimized for SpMV: Compressed Sparse Row (CSR) and Compressed Sparse Column (CSC).

\begin{algorithm}[t]
\caption{Padding Matrix Construction}
\begin{algorithmic}[1]
\Procedure{$\mathbf{P}$}{$m, n, p$}
    \State $r_{\text{out}} \gets (m + 2 \times p) \times (n + 2 \times p)$
    \State $c_{\text{out}} \gets m \times n$
    \State $R_{\text{ind}} \gets \text{zeros}(c_{\text{out}})$ \Comment{array of zeros of size $c_{\text{out}}$}
    \State $C_{\text{ind}} \gets \text{zeros}(c_{\text{out}})$ \Comment{array of zeros of size $c_{\text{out}}$}
    \State $V_\text{val} \gets \text{ones}(c_{\text{out}})$ \Comment{array of ones of size $c_{\text{out}}$}
    \vspace{0.5cm}
    \For{$i = 1$ \textbf{to} $m$}
        \State $r_{\text{start}} \gets (n + 2 \times p) \times (i + p - 1) + p$
        \State $r_{\text{end}} \gets r_{\text{start}} + n$
        \State $c_{\text{start}} \gets n \times (i - 1)$
        \State $c_{\text{end}} \gets n \times i$
        \vspace{0.5cm}
        \newline \Comment{Square brackets $[a:b]$ indicate indexing an array from $a$ (inclusive) to $b$ (exclusive).}
        \newline \Comment{$\text{arange}(a,b)$ creates an array from $a$ (inclusive) to $b$ (exclusive) with a step of 1.}
        \vspace{0.5cm}
        \State $R_{\text{ind}}[n \times (i-1):n \times i] \gets \text{arange}(r_{\text{start}}, r_{\text{end}})$
        \State $C_{\text{ind}}[n \times (i-1):n \times i] \gets \text{arange}(c_{\text{start}}, c_{\text{end}})$
    \EndFor
    \vspace{0.5cm}
    \newline \Comment{$\text{SpM}\Big((a,(b,c), (d,e)\Big)$ creates a sparse matrix with values in array $a$ whose indices are given in row and column arrays $(b,c)$ and whose overall size is $d \times e$.}
    \vspace{0.5cm}
    \State \Return \text{SpM}\Big(($V_{\text{val}}, (R_{\text{ind}}, C_{\text{ind}})), (r_{\text{out}}, c_{\text{out}})$\Big)
\EndProcedure
\end{algorithmic}
\end{algorithm}

\begin{algorithm}[htbp]
\caption{Convolution Matrix Construction}
\begin{algorithmic}[1]
\Procedure{$\mathbf{C}$}{$\mathbf{K}, k, m, n, p,s$}
    \State $w \gets  n+2p-k$
    \State $h \gets  m+2p-k$
    \newline \Comment{$R_{\text{rows}}$ is the number of rows in $\mathbf{R}$.}
    \State $R_{\text{rows}} \gets \lfloor w/s\rfloor +1$
    \newline \Comment{$B_{\text{columns}}$ is the number of columns of blocks in $\mathbf{C}$, excluding the remainder.}
    \State $B_{\text{columns}} \gets \lfloor h/s\rfloor+1$
    \newline \Comment{$v_{\text{rem}}$ is the number of remainder rows when vertically sliding.}
    \State $v_{\text{rem}} \gets h - s(B_{\text{columns}}-1)$
    \vspace{0.17cm}
    \State $\mathbf{R} \gets [\text{ }\text{ }\text{ }]$ \Comment{Initialize $\mathbf{R}$ as empty.}
    \newline \Comment{Create $\mathbf{R}$ by looping over horizontal slides.}
    \For{$i = 0$ \textbf{to} $R_{\text{rows}}-1$} 
        \newline \Comment{Create $r_{\text{row}}$ by looping over kernel rows.}
        \State $r_{\text{row}} \gets [\text{ }\text{ }\text{ }] $ \Comment{Initialize $r_{\text{row}}$ as empty.}
        \For{$j = 1$ \textbf{to} $k$}
            \State $\mathbf{0}_{\text{start}} = [0\cdots(\times si)]$
            \State $\mathbf{0}_{\text{end}} = [0\cdots(\times (w-si))]$
            \State $\mathbf{K}_{\text{row}} = [K_{j1} \cdots  K_{jk}]$
            \State $t_{\text{row}} \gets \begin{bmatrix} \mathbf{0}_{\text{start}} & \mathbf{K}_{\text{row}} &\mathbf{0}_{\text{end}} \end{bmatrix}$ 
            \newline \Comment{+= indicates array concatenation.}
            \State $r_{\text{row}} += t_\text{row} $ 
        \EndFor
        \newline \Comment{Set the $i$-th row of $\mathbf{R}$ as $r_{\text{row}}$.}
        \State $\mathbf{R}[i] \gets r_{\text{row}}$
    \EndFor
    \vspace{0.17cm}
    \newline \Comment{$\text{SpM}(\mathbf{R})$ creates a sparse matrix that records the non-zero values and their positions from within $\mathbf{R}$.}
    \vspace{0.17cm}
    \State $\mathbf{R} \gets \text{SpM}(\mathbf{R})$
    \vspace{0.17cm}
    \newline \Comment{$\text{SpM}(a,b)$ creates an empty sparse matrix with size $(a,b)$.}
    \State $\mathbf{0}_{\text{vert}} = \text{SpM}(R_{\text{rows}}, (n+2p)s)$
    \State $\mathbf{0}_{\text{rem}} = \text{SpM}(R_{\text{rows}}, (n+2p)v_{\text{rem}})$
    \vspace{0.17cm}
    \newline \Comment{Create $\mathbf{C}_{\text{block}}$, an array of sparse matrices, by looping over vertical slides.}
    \State $\mathbf{C}_{\text{block}} \gets [\text{ }\text{ }\text{ }]$ \Comment{Initialize $\mathbf{C}_{\text{block}}$ as empty.}
    \For{$i = 0$ \textbf{to} $B_{\text{columns}}-1$}
        \State $\mathbf{0}_{\text{start}} \gets [\mathbf{0}_{\text{vert}} \cdots(\times i)]$
        \State $\mathbf{0}_{\text{end}} \gets [\mathbf{0}_{\text{vert}} \cdots(\times (B_{\text{columns}}-1-i)) \text{ } \text{ } \mathbf{0}_{\text{rem}}]$
        \State $\mathbf{C}_{\text{block, row}} \gets \begin{bmatrix} \mathbf{0}_{\text{start}} & \mathbf{R} &\mathbf{0}_{\text{end}} \end{bmatrix}$
        \newline \Comment{Set the $i$-th row of $\mathbf{C}_{\text{block}}$ as $\mathbf{C}_{\text{block, row}}$.}
        \State $\mathbf{C}_{\text{block}}[i] \gets \mathbf{C}_{\text{block, row}}$
    \EndFor
    \vspace{0.17cm}
    \newline \Comment{$\text{blocks}(\mathbf{C}_{\text{block}})$ creates a sparse block matrix from an array of sparse matrices.}
    \vspace{0.17cm}
    \State $\mathbf{C} = \text{blocks} (\mathbf{C}_{\text{block}})$
    \State \Return$\mathbf{C}$
\EndProcedure
\end{algorithmic}
\end{algorithm}

\begin{algorithm}[b]
\caption{Convolution Calculation}
\begin{algorithmic}[b]
    \State{Initialize and Compute: $\mathbf{P}, \mathbf{C}, \text{and } \mathbf{T}=\mathbf{CP}$ (the full transformation as a sparse matrix)}
    \Procedure{Convolution}{$\mathbf{A}, \mathbf{T}$}
    \State{$\Vec{a} = \text{vec}(\mathbf{A})$} \Comment{Also called $\text{ravel()}$ or $\text{view()}$}
    \State{$\Vec{a}_{\text{conv}} = \mathbf{T}\Vec{a}$}
    \State{$\mathbf{A}_{\text{conv}} = \text{vec}^{-1}(\Vec{a}_{\text{conv}})$} \Comment{Also called $\text{reshape()}$}
    \State \Return $\mathbf{A}_{\text{conv}}$
    \EndProcedure
\end{algorithmic}
\end{algorithm}

\section{Experimental Results} As a proof-of-concept, we compare five implementations of our algorithm (Algorithm 3) to \texttt{Conv2D}, an \texttt{im2col} convolution implementation in the popular deep learning framework, \texttt{PyTorch} \cite{10.5555/3454287.3455008}. We consider the cascade of convolutions in a popular deep CNN, DenseNet121, as a test case \cite{8099726}. 

\subsection{Materials and Methods}
We use the \texttt{SciPy} and \texttt{NumPy} libraries to create CSC and CSR versions of the algorithm for CPU, which we refer to as CSC-C and CSR-C respectively \cite{2020SciPy-NMeth, harris2020array}. We use \texttt{CuPy} to create CSC and CSR versions of the algorithm for GPU, which we refer to as CSC-G and CSR-G respectively, and we use \texttt{PyTorch} to create an additional CSR version, which we refer to as CSR-T \cite{cupy_learningsys2017, 10.5555/3454287.3455008}. We refer to the CPU and GPU versions of \texttt{Conv2D} as \texttt{Conv2D}-C and \texttt{Conv2D}-G respectively.

We determine the single-channel input matrix size ($m \times n$), kernel size ($k \times k$), stride ($s$), and padding ($p$) for each layer of the feature extraction portion of DenseNet121, given a standard $224 \times 224$, three-channel input to the model. Then we model the convolution (or pooling, which we also represent as a convolution operation) at each layer by generating a random normal $m \times n$ input matrix and random normal $k \times k$ kernel. Max-pooling cannot be accurately represented as a kernel operation; however, for the purposes of this demonstration, we model it as such. We perform all five implementations of our algorithm and \texttt{Conv2D}-C and \texttt{Conv2D}-G with these parameters for 10,000 trials.

Code for experiments is available at: \url{https://github.com/ZanChaudhry/SpMV_Conv}. Experiments were conducted on a machine equipped with the following hardware:
\begin{itemize}
    \item \textbf{CPU:} Intel Core i7-8700 (6 cores, 12 threads, 3.20 GHz base clock)
    \item \textbf{GPU:} NVIDIA GeForce GTX 1070 (8 GB GDDR5, CUDA Compute Capability 6.1)
\end{itemize}
\subsection{Results and Discussion}
We present the results for CPU-based implementations in Figure 1 and for GPU-based implementations in Figure 2. Both figures are included as full-pages at the end of the paper. We also sum all layer times for each trial of each implementation, and we calculate the mean total time. We present these times in Table 1. In Table 2, we present the layer details, $(m, n, k, s, p)$, for each layer. Table 2 is included as a full-page figure at the end of the paper.

On CPU (Fig. 1), one of the implementations of our algorithm (CSC-C or CSR-C) gives the best performance at each layer. However, the optimal implementation varies from layer to layer. The total time (Table 1) is very similar for both implementations and roughly half of the time of \texttt{Conv2D}-C. 

On GPU (Fig. 2), all GPU implementations present significant speed improvements on the first layer, likely due to increased parallelizability at large matrix sizes (roughly $3.6\times$ for implementations of our algorithm and $5.5\times$ for \texttt{Conv2D}-G). At this layer, \texttt{Conv2D}-G gives the best performance by a considerable margin. At all other layers, CSR-T gives the best performance, though with much smaller margins than on the CPU. CSR-T is generally followed by \texttt{Conv2D}-G, and lastly with CSC-G, CSR-G giving similar performances. The total times in Table 1 also reflect this fact. Likely, the \texttt{PyTorch} SpMV implementation is more optimized on GPU than \texttt{CuPy}.

The considerable performance of \texttt{Conv2D}-G at the first layer likely arises from the increased parallelizability of General Matrix Multiplications (GEMMs) in comparison to SpMVs, which require additional logical operations for indexing \cite{gao2024systematicliteraturesurveysparse}. Through sparse formats that leverage the block structure of $\mathbf{C}$, more efficient parallelization may be possible to make our method considerably more competitive at larger sizes on the GPU in future work. Additional GPU-level improvements may be possible, as sparse formats are a recent addition in \texttt{PyTorch}, whereas \texttt{Conv2D} has been considerably optimized \cite{pytorch_sparse}. Aside from this first layer, the GPU-based implementations are significantly slower than those on CPU, as at smaller sizes, the many parallel cores of the GPU cannot be used efficiently compared to the much more powerful CPU cores \cite{https://doi.org/10.1049/joe.2018.9178}. This paper serves as a proof-of-concept mainly to present the mathematical motivation behind the algorithm, so we investigate a minimal test problem; however, in future work we can explore the multi-channel and batched contexts in which GPU implementations provide considerable advantages. 

Overall, we see significant speed improvements from applying our algorithms (particularly on the CPU). Likely, this stems from the required matrix creation step in the \texttt{im2col} method, whereas our pre-computed transformation matrix simply requires some view/reshape operations and an SpMV operation. Our method is disadvantageous, however, in contexts where the kernel is changing (for example CNN training), and thus the transformation must be repeatedly computed. But in fixed kernel settings (for example CNN inference), our method can significantly improve computation speed, potentially with great performance improvements in time-sensitive settings (for example real-time CNN inference). 

\begin{table}[htbp]
    \centering
    \caption{Mean Total Execution Time Over All Layers of Simulated DenseNet121 ($\mu \pm \sigma/\sqrt{n}$, $n=10,000$)}
    \vspace{0.2cm}
    \begin{tabular}{@{}c|c@{}} 
        \toprule
         Algorithm & Time (\SI{}{\micro\second}) \\ \midrule
         CSR-T & $\mathbf{9,187 \pm 5}$\\
         CSC-G &$11,740 \pm 7$ \\
         CSR-G &$11,737 \pm 4$ \\
         \texttt{Conv2D}-G &$11,143 \pm 3$ \\
         \midrule
         CSC-C &$1,536 \pm 1$ \\
         CSR-C &$\mathbf{1,507 \pm 1}$ \\
         \texttt{Conv2D}-C &$2,871 \pm 4$ \\
         \bottomrule
    \end{tabular}
\end{table}

\section{Conclusion} Here we present a simple algorithm for representing convolution with zero-padding and stride as a sparse transformation matrix applied a vectorized input in a sparse matrix vector multiplication (SpMV). Furthermore, in Theorem 2.1, we contribute an explicit expression for the number of non-zero multiplications in a convolution with stride and padding, which is relevant for future work on leveraging sparsity in convolution algorithms. We provide a proof-of-concept implementation of our algorithm and assess its performance on CPU and GPU. Future work can explore more efficient sparse matrix representations and multiplication methods for increased parallelizability in GPU settings. Overall, we add knowledge and methods to the ongoing exploration of algorithmic methods for exploiting sparsity in convolutions with padding.


\bibliographystyle{siam} 
\bibliography{references}

\begin{thebibliography}{10}

\bibitem{tensorflow2015-whitepaper}
{\sc M.~Abadi, A.~Agarwal, P.~Barham, E.~Brevdo, Z.~Chen, C.~Citro, G.~S. Corrado, A.~Davis, J.~Dean, M.~Devin, S.~Ghemawat, I.~Goodfellow, A.~Harp, G.~Irving, M.~Isard, Y.~Jia, R.~Jozefowicz, L.~Kaiser, M.~Kudlur, J.~Levenberg, D.~Man\'{e}, R.~Monga, S.~Moore, D.~Murray, C.~Olah, M.~Schuster, J.~Shlens, B.~Steiner, I.~Sutskever, K.~Talwar, P.~Tucker, V.~Vanhoucke, V.~Vasudevan, F.~Vi\'{e}gas, O.~Vinyals, P.~Warden, M.~Wattenberg, M.~Wicke, Y.~Yu, and X.~Zheng}, {\em {{TensorFlow}: Large-Scale Machine Learning on Heterogeneous Systems}}, 2015.
\newblock Software available from \url{https://tensorflow.org}.

\bibitem{anderson2017lowmemorygemmbasedconvolutionalgorithms}
{\sc A.~Anderson, A.~Vasudevan, C.~Keane, and D.~Gregg}, {\em {Low-memory GEMM-based convolution algorithms for deep neural networks}}, 2017.

\bibitem{blackford2002updated}
{\sc L.~S. Blackford, A.~Petitet, R.~Pozo, K.~Remington, R.~C. Whaley, J.~Demmel, J.~Dongarra, I.~Duff, S.~Hammarling, G.~Henry, et~al.}, {\em {An updated set of basic linear algebra subprograms (BLAS)}}, ACM Transactions on Mathematical Software, 28 (2002), pp.~135--151.

\bibitem{chellapilla:inria-00112631}
{\sc K.~Chellapilla, S.~Puri, and P.~Simard}, {\em {High Performance Convolutional Neural Networks for Document Processing}}, in {Tenth International Workshop on Frontiers in Handwriting Recognition}, G.~Lorette, ed., La Baule (France), Oct. 2006, {Universit{\'e} de Rennes 1}, {Suvisoft}.

\bibitem{gao2024systematicliteraturesurveysparse}
{\sc J.~Gao, B.~Liu, W.~Ji, and H.~Huang}, {\em {A Systematic Literature Survey of Sparse Matrix-Vector Multiplication}}, 2024.

\bibitem{harris2020array}
{\sc C.~R. Harris, K.~J. Millman, S.~J. van~der Walt, R.~Gommers, P.~Virtanen, D.~Cournapeau, E.~Wieser, J.~Taylor, S.~Berg, N.~J. Smith, R.~Kern, M.~Picus, S.~Hoyer, M.~H. van Kerkwijk, M.~Brett, A.~Haldane, J.~F. del R{\'{i}}o, M.~Wiebe, P.~Peterson, P.~G{\'{e}}rard-Marchant, K.~Sheppard, T.~Reddy, W.~Weckesser, H.~Abbasi, C.~Gohlke, and T.~E. Oliphant}, {\em Array programming with {NumPy}}, Nature, 585 (2020), pp.~357--362.

\bibitem{8099726}
{\sc G.~Huang, Z.~Liu, L.~Van Der~Maaten, and K.~Q. Weinberger}, {\em {Densely Connected Convolutional Networks}}, in 2017 IEEE Conference on Computer Vision and Pattern Recognition (CVPR), 2017, pp.~2261--2269.

\bibitem{https://doi.org/10.1049/joe.2018.9178}
{\sc Z.~Huang, N.~Ma, S.~Wang, and Y.~Peng}, {\em {GPU computing performance analysis on matrix multiplication}}, The Journal of Engineering, 2019 (2019), pp.~9043--9048.

\bibitem{mkl_dnn}
{\sc {Intel Corporation}}, {\em {Intel MKL-DNN: Deep Neural Network Library}}.
\newblock \url{https://github.com/oneapi-src/oneDNN}, 2024.
\newblock Accessed: 2024-11-28.

\bibitem{cudnn}
{\sc {NVIDIA Corporation}}, {\em {cuDNN: CUDA Deep Neural Network Library}}.
\newblock \url{https://developer.nvidia.com/cudnn}, 2024.
\newblock Accessed: 2024-11-28.

\bibitem{cupy_learningsys2017}
{\sc R.~Okuta, Y.~Unno, D.~Nishino, S.~Hido, and C.~Loomis}, {\em {CuPy: A NumPy-Compatible Library for NVIDIA GPU Calculations}}, in Proceedings of Workshop on Machine Learning Systems (LearningSys) in The Thirty-first Annual Conference on Neural Information Processing Systems (NIPS), 2017.

\bibitem{10.5555/3454287.3455008}
{\sc A.~Paszke, S.~Gross, F.~Massa, A.~Lerer, J.~Bradbury, G.~Chanan, T.~Killeen, Z.~Lin, N.~Gimelshein, L.~Antiga, A.~Desmaison, A.~K\"{o}pf, E.~Yang, Z.~DeVito, M.~Raison, A.~Tejani, S.~Chilamkurthy, B.~Steiner, L.~Fang, J.~Bai, and S.~Chintala}, {\em {PyTorch: an imperative style, high-performance deep learning library}}, in Proceedings of the 33rd International Conference on Neural Information Processing Systems, Red Hook, NY, USA, 2019, Curran Associates Inc.

\bibitem{pytorch_sparse}
{\sc {PyTorch Team}}, {\em {Sparse Tensor Support in PyTorch}}.
\newblock \url{https://pytorch.org/docs/stable/sparse.html}, 2024.
\newblock Accessed: 2024-11-28.

\bibitem{10.5555/1593511}
{\sc G.~Van~Rossum and F.~L. Drake}, {\em {Python 3 Reference Manual}}, CreateSpace, Scotts Valley, CA, 2009.

\bibitem{2020SciPy-NMeth}
{\sc P.~Virtanen, R.~Gommers, T.~E. Oliphant, M.~Haberland, T.~Reddy, D.~Cournapeau, E.~Burovski, P.~Peterson, W.~Weckesser, J.~Bright, S.~J. {van der Walt}, M.~Brett, J.~Wilson, K.~J. Millman, N.~Mayorov, A.~R.~J. Nelson, E.~Jones, R.~Kern, E.~Larson, C.~J. Carey, {\.I}.~Polat, Y.~Feng, E.~W. Moore, J.~{VanderPlas}, D.~Laxalde, J.~Perktold, R.~Cimrman, I.~Henriksen, E.~A. Quintero, C.~R. Harris, A.~M. Archibald, A.~H. Ribeiro, F.~Pedregosa, P.~{van Mulbregt}, and {SciPy 1.0 Contributors}}, {\em {{SciPy} 1.0: Fundamental Algorithms for Scientific Computing in Python}}, Nature Methods, 17 (2020), pp.~261--272.

\bibitem{zhou2021characterizingdemystifyingimplicitconvolution}
{\sc Y.~Zhou, M.~Yang, C.~Guo, J.~Leng, Y.~Liang, Q.~Chen, M.~Guo, and Y.~Zhu}, {\em {Characterizing and Demystifying the Implicit Convolution Algorithm on Commercial Matrix-Multiplication Accelerators}}, 2021.

\end{thebibliography}

\clearpage

\begin{table*}[p]
    \footnotesize
    \centering
    \caption{DenseNet121 Layer Information (Read Vertically, then Across)}
    \vspace{0.2cm}
    \begin{tabular}{@{}c|c|c|c|c|c@{}} 
        \toprule
         Layer Name & $(m,n,k,s,p)$ & Layer Name (Cont.) &  $(m,n,k,s,p)$ & Layer Name (Cont.) &  $(m,n,k,s,p)$ \\
         \midrule
         conv0 & $(224, 224, 7, 2, 3)$ & block3.layer1.conv2 & $(14, 14, 3, 1, 1)$ & block3.layer22.conv2 & $(14, 14, 3, 1, 1)$ \\
         pool0 & $(112, 112, 3, 2, 1)$ &  block3.layer2.conv1 & $(14, 14, 1, 1, 0)$ &  block3.layer23.conv1 & $(14, 14, 1, 1, 0)$ \\
         block1.layer1.conv1 & $(56, 56, 1, 1, 0)$ & block3.layer2.conv2 & $(14, 14, 3, 1, 1)$ & block3.layer23.conv2 & $(14, 14, 3, 1, 1)$ \\
         block1.layer1.conv2 & $(56, 56, 3, 1, 1)$ & block3.layer3.conv1 & $(14, 14, 1, 1, 0)$ & block3.layer24.conv1 & $(14, 14, 1, 1, 0)$ \\
         block1.layer2.conv1 & $(56, 56, 1, 1, 0)$ & block3.layer3.conv2 & $(14, 14, 3, 1, 1)$ & block3.layer24.conv2 & $(14, 14, 3, 1, 1)$ \\
         block1.layer2.conv2 & $(56, 56, 3, 1, 1)$ & block3.layer4.conv1 & $(14, 14, 1, 1, 0)$ & transition3.conv & $(14, 14, 1, 1, 0)$ \\
         block1.layer3.conv1 & $(56, 56, 1, 1, 0)$ & block3.layer4.conv2 & $(14, 14, 3, 1, 1)$ & transition3.pool & $(14, 14, 2, 2, 0)$ \\
         block1.layer3.conv2 & $(56, 56, 3, 1, 1)$ & block3.layer5.conv1 & $(14, 14, 1, 1, 0)$ & block4.layer1.conv1 & $(7, 7, 1, 1, 0)$ \\
         block1.layer4.conv1 & $(56, 56, 1, 1, 0)$ & block3.layer5.conv2 & $(14, 14, 3, 1, 1)$ & block4.layer1.conv2 & $(7, 7, 3, 1, 1)$ \\
         block1.layer4.conv2 & $(56, 56, 3, 1, 1)$ & block3.layer6.conv1 & $(14, 14, 1, 1, 0)$ & block4.layer2.conv1 & $(7, 7, 1, 1, 0)$ \\
         block1.layer5.conv1 & $(56, 56, 1, 1, 0)$ & block3.layer6.conv2 & $(14, 14, 3, 1, 1)$ & block4.layer2.conv2 & $(7, 7, 3, 1, 1)$ \\
         block1.layer5.conv2 & $(56, 56, 3, 1, 1)$ & block3.layer7.conv1 & $(14, 14, 1, 1, 0)$ & block4.layer3.conv1 & $(7, 7, 1, 1, 0)$ \\
         block1.layer6.conv1 & $(56, 56, 1, 1, 0)$ & block3.layer7.conv2 & $(14, 14, 3, 1, 1)$ & block4.layer3.conv2 & $(7, 7, 3, 1, 1)$ \\
         block1.layer6.conv2 & $(56, 56, 3, 1, 1)$ & block3.layer8.conv1 & $(14, 14, 1, 1, 0)$ & block4.layer4.conv1 & $(7, 7, 1, 1, 0)$ \\
         transition1.conv & $(56, 56, 1, 1, 0)$ & block3.layer8.conv2 & $(14, 14, 3, 1, 1)$ & block4.layer4.conv2 & $(7, 7, 3, 1, 1)$ \\
         transition1.pool & $(56, 56, 2, 2, 0)$ & block3.layer9.conv1 & $(14, 14, 1, 1, 0)$ & block4.layer5.conv1 & $(7, 7, 1, 1, 0)$ \\
         block2.layer1.conv2 & $(28, 28, 3, 1, 1)$ & block3.layer9.conv2 & $(14, 14, 3, 1, 1)$ & block4.layer5.conv2 & $(7, 7, 3, 1, 1)$ \\
         block2.layer2.conv1 & $(28, 28, 1, 1, 0)$ & block3.layer10.conv1 & $(14, 14, 1, 1, 0)$ & block4.layer6.conv1 & $(7, 7, 1, 1, 0)$ \\
         block2.layer2.conv2 & $(28, 28, 3, 1, 1)$ & block3.layer10.conv2 & $(14, 14, 3, 1, 1)$ & block4.layer6.conv2 & $(7, 7, 3, 1, 1)$ \\
         block2.layer3.conv1 & $(28, 28, 1, 1, 0)$ & block3.layer11.conv1 & $(14, 14, 1, 1, 0)$ & block4.layer7.conv1 & $(7, 7, 1, 1, 0)$ \\
         block2.layer3.conv2 & $(28, 28, 3, 1, 1)$ & block3.layer11.conv2 & $(14, 14, 3, 1, 1)$ & block4.layer7.conv2 & $(7, 7, 3, 1, 1)$ \\
         block2.layer4.conv1 & $(28, 28, 1, 1, 0)$ & block3.layer12.conv1 & $(14, 14, 1, 1, 0)$ & block4.layer8.conv1 & $(7, 7, 1, 1, 0)$ \\
         block2.layer4.conv2 & $(28, 28, 3, 1, 1)$ & block3.layer12.conv2 & $(14, 14, 3, 1, 1)$ & block4.layer8.conv2 & $(7, 7, 3, 1, 1)$\\
         block2.layer5.conv1 & $(28, 28, 1, 1, 0)$ & block3.layer13.conv1 & $(14, 14, 1, 1, 0)$ & block4.layer9.conv1 & $(7, 7, 1, 1, 0)$ \\
         block2.layer5.conv2 & $(28, 28, 3, 1, 1)$ & block3.layer13.conv2 & $(14, 14, 3, 1, 1)$ & block4.layer9.conv2 & $(7, 7, 3, 1, 1)$ \\
         block2.layer6.conv1 & $(28, 28, 1, 1, 0)$ & block3.layer14.conv1 & $(14, 14, 1, 1, 0)$ & block4.layer10.conv1 & $(7, 7, 1, 1, 0)$ \\
         block2.layer6.conv2 & $(28, 28, 3, 1, 1)$ & block3.layer14.conv2 & $(14, 14, 3, 1, 1)$ & block4.layer10.conv2 & $(7, 7, 3, 1, 1)$ \\
         block2.layer7.conv1 & $(28, 28, 1, 1, 0)$ & block3.layer15.conv1 & $(14, 14, 1, 1, 0)$ & block4.layer11.conv1 & $(7, 7, 1, 1, 0)$ \\
         block2.layer7.conv2 & $(28, 28, 3, 1, 1)$ & block3.layer15.conv2 & $(14, 14, 3, 1, 1)$ & block4.layer11.conv2 & $(7, 7, 3, 1, 1)$ \\
         block2.layer8.conv1 & $(28, 28, 1, 1, 0)$ & block3.layer16.conv1 & $(14, 14, 1, 1, 0)$ & block4.layer12.conv1 & $(7, 7, 1, 1, 0)$ \\
         block2.layer8.conv2 & $(28, 28, 3, 1, 1)$ & block3.layer16.conv2 & $(14, 14, 3, 1, 1)$ & block4.layer12.conv2 & $(7, 7, 3, 1, 1)$ \\
         block2.layer9.conv1 & $(28, 28, 1, 1, 0)$ & block3.layer17.conv1 & $(14, 14, 1, 1, 0)$ & block4.layer13.conv1 & $(7, 7, 1, 1, 0)$ \\
         block2.layer9.conv2 & $(28, 28, 3, 1, 1)$ & block3.layer17.conv2 & $(14, 14, 3, 1, 1)$ & block4.layer13.conv2 & $(7, 7, 3, 1, 1)$ \\
         block2.layer10.conv1 & $(28, 28, 1, 1, 0)$ & block3.layer18.conv1 & $(14, 14, 1, 1, 0)$ & block4.layer14.conv1 & $(7, 7, 1, 1, 0)$ \\
         block2.layer10.conv2 & $(28, 28, 3, 1, 1)$ & block3.layer18.conv2 & $(14, 14, 3, 1, 1)$ & block4.layer14.conv2 & $(7, 7, 3, 1, 1)$ \\
         block2.layer11.conv1 & $(28, 28, 1, 1, 0)$ & block3.layer19.conv1 & $(14, 14, 1, 1, 0)$ & block4.layer15.conv1 & $(7, 7, 1, 1, 0)$ \\
         block2.layer11.conv2 & $(28, 28, 3, 1, 1)$ & block3.layer19.conv2 & $(14, 14, 3, 1, 1)$ & block4.layer15.conv2 & $(7, 7, 3, 1, 1)$ \\
         block2.layer12.conv1 & $(28, 28, 1, 1, 0)$ & block3.layer20.conv1 & $(14, 14, 1, 1, 0)$ & block4.layer16.conv1 & $(7, 7, 1, 1, 0)$ \\
         block2.layer12.conv2 & $(28, 28, 3, 1, 1)$ & block3.layer20.conv2 & $(14, 14, 3, 1, 1)$ & block4.layer16.conv2 & $(7, 7, 3, 1, 1)$ \\
         transition2.conv & $(28, 28, 1, 1, 0)$ & block3.layer21.conv1 & $(14, 14, 1, 1, 0)$  \\
         transition2.pool & $(28, 28, 2, 2, 0)$ & block3.layer21.conv2 & $(14, 14, 3, 1, 1)$   \\
         block3.layer1.conv1 & $(14, 14, 1, 1, 0)$ & block3.layer22.conv1 & $(14, 14, 1, 1, 0)$ \\

         \bottomrule
    \end{tabular}
\end{table*}

\clearpage

\begin{figure*}[p]
\centering
\includegraphics[height=0.94\textheight]{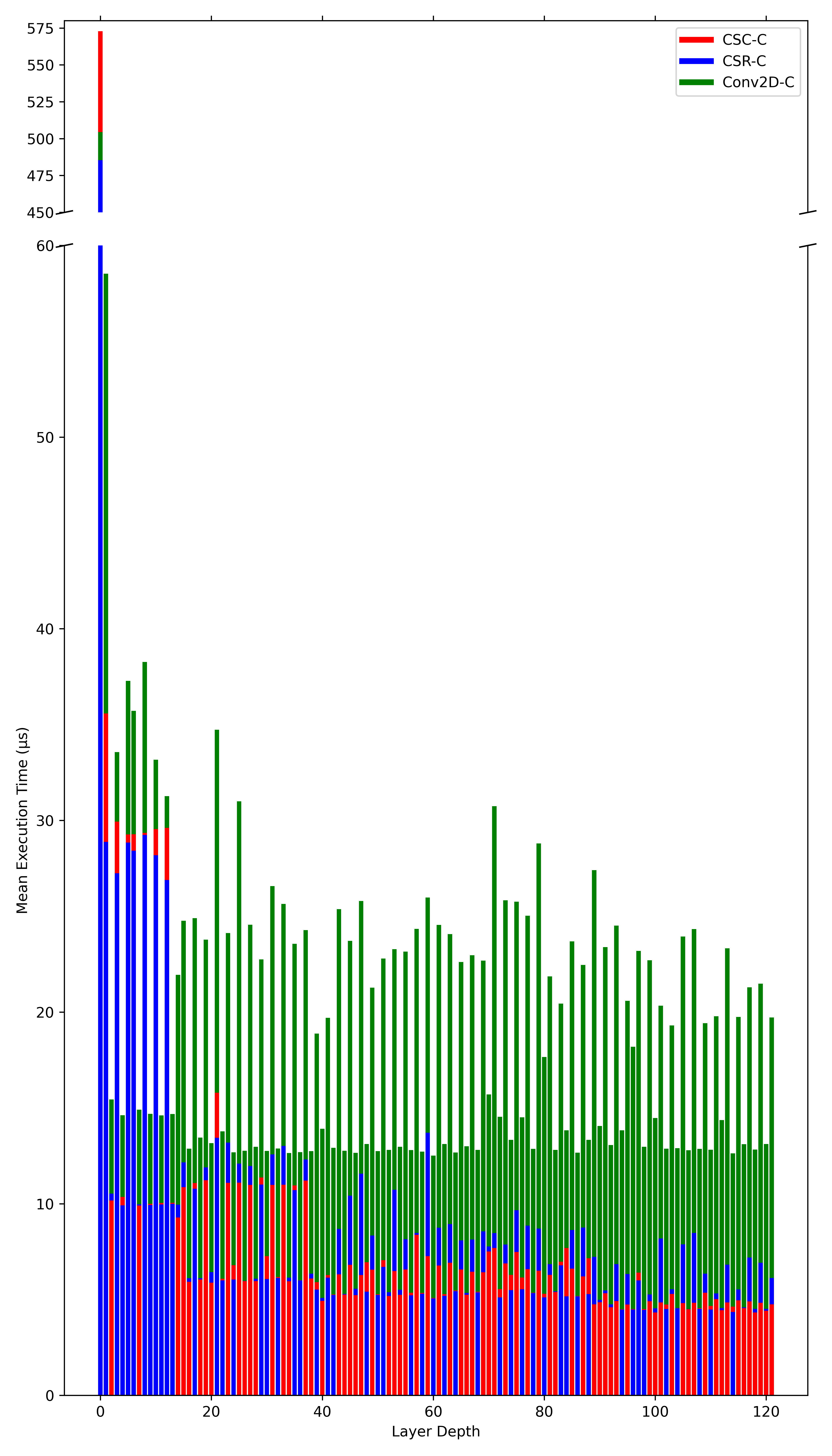}
\caption{Mean execution times $(n=10,000)$ for CPU experiments are presented over the simulated, randomly generated DenseNet121 layers. Columns are stacked to show time differences between implementations.}
\end{figure*}

\clearpage

\begin{figure*}[p]
\centering
\includegraphics[height=0.94\textheight]{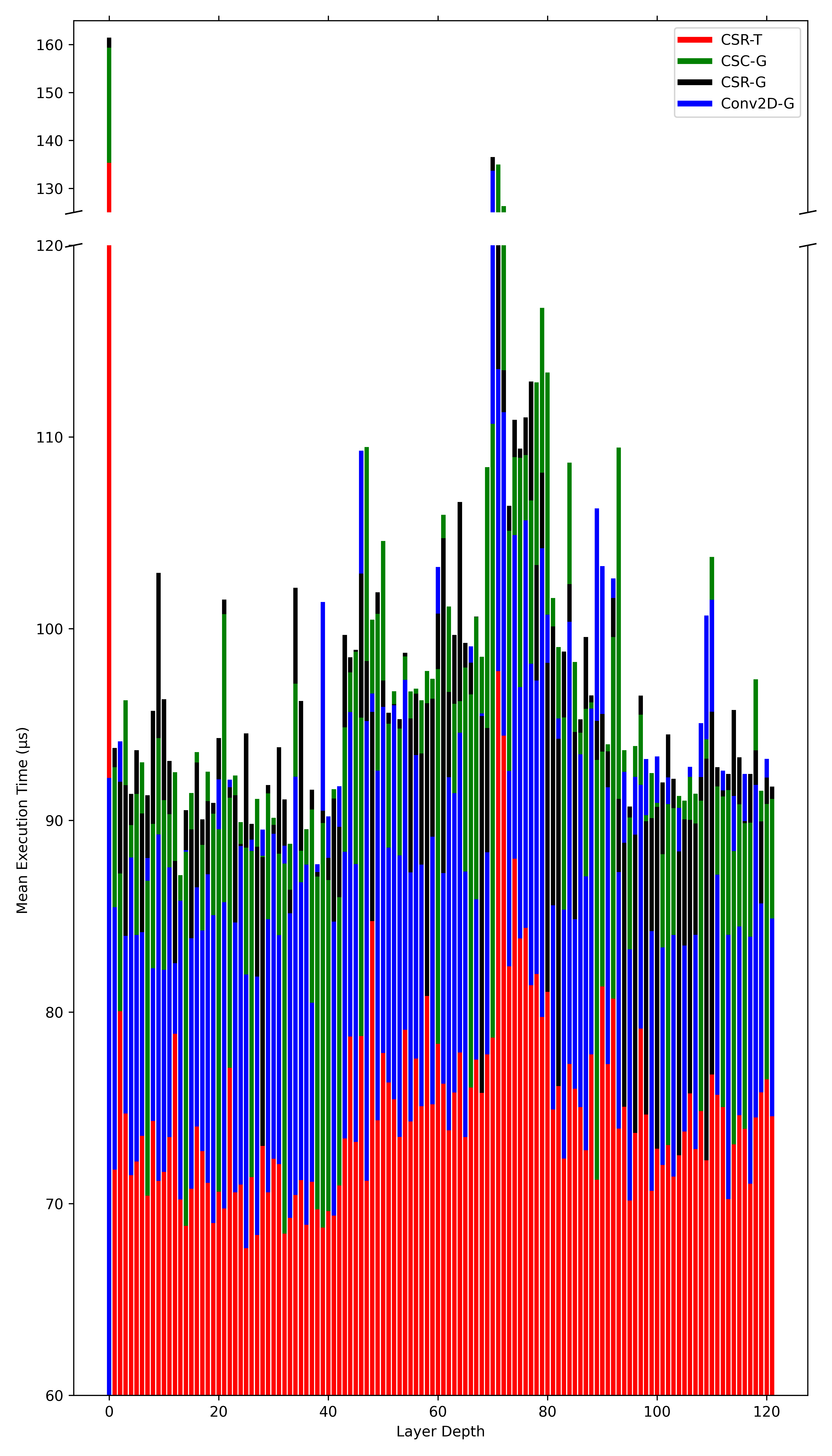}
\caption{Mean execution times $(n=10,000)$ for GPU experiments are presented over the simulated, randomly generated DenseNet121 layers. Columns are stacked to show time differences between implementations.}
\end{figure*}

\end{document}